\documentclass{article}
\PassOptionsToPackage{table}{xcolor}
\usepackage{iclr2027_conference,times}
\iclrfinalcopy
\usepackage[T1]{fontenc}
\usepackage[utf8]{inputenc}
\usepackage{microtype}
\usepackage{graphicx}
\usepackage{amsmath,amssymb,amsthm}
\usepackage{booktabs,tabularx}
\usepackage{multirow,pifont}
\usepackage{xcolor}
\usepackage{tikz}
\usetikzlibrary{arrows.meta,positioning,calc,fit,backgrounds}
\usepackage{hyperref}
\usepackage{url}
\newcommand{\method}{OmniMoE-VL}
\newcommand{\R}{\mathbb{R}}

\newcommand{\softmax}{\operatorname{softmax}}
\newcommand{\TopK}{\operatorname{TopKSoftmax}}
\newcommand{\Attn}{\operatorname{Attn}}
\newcommand{\LN}{\operatorname{LN}}
\newcommand{\rank}{\operatorname{rank}}
\newcommand{\diag}{\operatorname{diag}}
\newcommand{\KL}{\operatorname{KL}}
\newcommand{\neff}{n_{\mathrm{eff}}}
\newcommand{\ntgt}{n_{\mathrm{tgt}}}

\newtheorem{proposition}{Proposition}

\definecolor{ink}{HTML}{243547}
\definecolor{routeblue}{HTML}{365E95}
\definecolor{visionteal}{HTML}{347D78}
\definecolor{querypurple}{HTML}{796396}
\definecolor{routefill}{HTML}{EDF2F8}
\definecolor{visionfill}{HTML}{EDF5F2}
\definecolor{queryfill}{HTML}{F3EFF7}
\definecolor{softgray}{HTML}{F4F5F6}
\definecolor{headblue}{RGB}{214,234,248}
\definecolor{groupblue}{RGB}{238,246,255}
\definecolor{rowblueA}{RGB}{230,242,255}
\definecolor{rowblueB}{RGB}{244,249,255}
\definecolor{bestblue}{RGB}{193,216,252}

\providecommand{\cmark}{\ding{51}}
\providecommand{\xmark}{\ding{55}}
\tikzset{flow/.style={-{Stealth[length=1.7mm]},draw=ink,line width=.65pt},
 control/.style={flow,dashed,draw=routeblue},
 block/.style={draw=ink!55,rounded corners=2pt,fill=white,align=center,inner sep=5pt,font=\sffamily\fontsize{8}{9}\selectfont},
 note/.style={font=\sffamily\scriptsize,align=center,text=ink}}

\title{OmniMoE-VL: A Sparse Vision-Language Model\\with Coupled Visual-Depth Routing}
\author{
Long Qian$^{1,2}$,
Bingke Zhu$^{1,2}$\thanks{Corresponding author.},
Jiaqi Wei$^{1,3}$\footnotemark[2],
Yingying Chen$^{1,2}$,
Jinqiao Wang$^{1,2,4}$
\\
$^{1}$Foundation Model Research Center, Institute of Automation, Chinese Academy of Sciences
\\
$^{2}$School of Future Technology, University of Chinese Academy of Sciences
\\
$^{3}$School of Engineering, Cardiff University
\\
$^{4}$Wuhan AI Research
\\
\texttt{qianlong2024@ia.ac.cn,}
\texttt{WeiJ16@cardiff.ac.uk,}
\\
\texttt{\{bingke.zhu,yingying.chen,jqwang\}@nlpr.ia.ac.cn}
}
\hypersetup{pdftitle={OmniMoE-VL: A Sparse Vision-Language Model with Coupled Visual-Depth Routing},pdfauthor={Anonymous authors}}

\begin{document}
\maketitle
\begin{abstract}
Vision-language models (VLMs) increasingly use sparse mixture-of-experts (MoE) to scale language-side computation, yet visual information is typically routed only after passing through a fixed cross-modal interface. This leaves an important decision unresolved: which intermediate visual representations should be exposed to language computation for a given question? We introduce OmniMoE-VL, a sparse VLM with a coupled visual-depth routed projector. For each image–prompt pair, the projector selects a sparse set of intermediate visual depths and reuses the resulting global preference to guide both local patch fusion and dynamic visual injection into the language model. This design enables question-dependent visual access while preserving the native visual-token sequence, and complements token-level expert routing in the vision and language stacks. Across eight image-based benchmarks, OmniMoE-VL achieves an average score of 85.9 with 28B total and 9B activated parameters. Controlled comparisons show that the routed visual interface provides the dominant architectural gain, while matched route and component controls, same-image route analysis, and route interventions further support the value of coupling and question-conditioned visual access.
\end{abstract}

\section{Introduction}
\label{sec:introduction}

Vision-language models (VLMs) connect a visual encoder to a language model through a multimodal interface, enabling image understanding, text recognition, and multimodal reasoning~\citep{liu2023visualinstructiontuning,li2024llavaonevisioneasyvisualtask}. Scaling this interface is increasingly important as VLMs are asked to read fine text, recognize objects, resolve spatial relations, and combine evidence across an image. Sparse mixture-of-experts (MoE)~\citep{lin2024moe,jing2025evomoeexpertevolutionmixture} offers a direct capacity-compute trade-off by assigning each token to a small subset of parameter experts. These models establish language-side conditional computation as a practical way to scale VLMs, but the visual representation delivered to that computation is still determined by the interface before the language experts are invoked.

The interface therefore controls a decision that language-side routing does not: which visual source representations become available to the model. Intermediate encoder depths carry different kinds of evidence. Earlier streams retain local appearance and text detail, middle streams organize neighboring patches into objects, and later streams provide broader contextual structure. A dense projector or a fixed visual stream merges this depth axis before the language router sees the representation. The language router can then choose how to process these tokens, but it cannot recover a visual stream that was not exposed at the interface. Even when visual tokens are prepended before the first LLM block, this remains a post-projection routing problem: the source-depth decision has been made.

Question-dependent source access also has two downstream destinations. The projector fuses aligned visual patches into the sequence that enters the language model, while deep visual injection supplies additional information to intermediate language states~\citep{meng2024deepstackdeeplystackingvisual}. If these destinations select depths independently, the input sequence and the layer updates can emphasize different visual evidence for the same image-question pair. We instead require a shared global preference over visual depths, then give each destination the operation it needs: local patch compatibility refines the preference into group-specific weights for fusion, whereas fixed depth-to-layer anchors use the global weights to scale dynamic residual updates. This separates global question-dependent selection from local spatial adaptation while keeping both pathways aligned. The resulting hypothesis is directly testable: different questions about one image should change the route, and replacing the native route with a route induced by another question should impair prediction.

\begin{figure}[t]
\centering
\includegraphics[width=\linewidth]{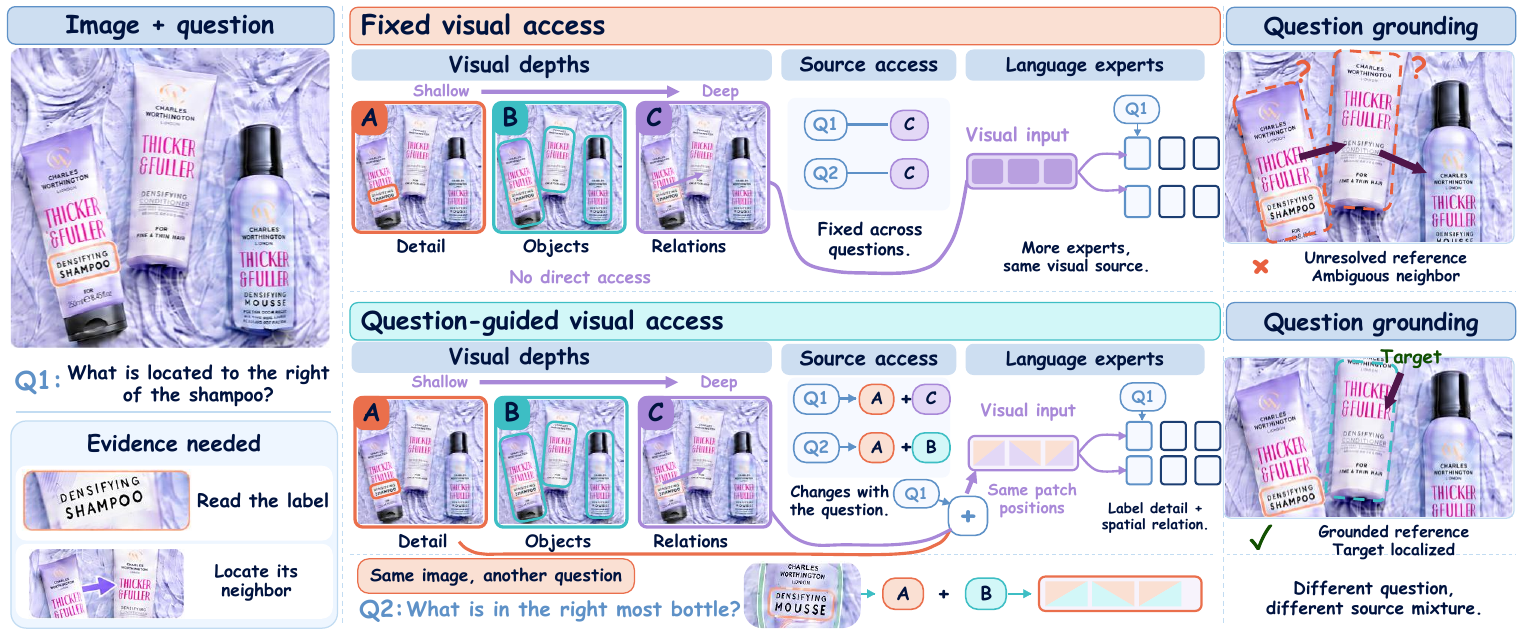}
\caption{\textbf{Question-dependent visual evidence motivates depth selection.} The first question requires label recognition and spatial context to identify the shampoo and its right-hand neighbor; the second requires the label on the rightmost bottle. A fixed interface supplies the same visual source to language experts across questions. Prompt-conditioned depth selection combines intermediate streams according to the evidence needed, connecting fine detail, object context, and spatial relations. Colors track the contributing visual streams.}
\label{fig:interface}
\end{figure}

To address these challenges, we introduce \method, a sparse VLM built around this routed interface (Figure~\ref{fig:interface}). The vision and language stacks use token-level parameter experts, while the projector performs sample-level routing over ten streams tapped at blocks $1,5,\ldots,37$. A prompt-conditioned bank of $16$ softly participating prototypes forms eight router queries; cross-attention over the projected streams produces a top-4 global route $w$. The route supplies the logarithmic prior for group-local top-4 weights $g_b$, whose aligned weighted fusion preserves the patch sequence, and the same $w$ weights dynamic transformations injected at fixed language layers $a(s)=s$. Thus the model has one explicit source-selection decision with two coordinated uses, rather than three unrelated sparse modules. An ordered usage prior and underuse bonus guide exploration during early training.

Across eight benchmarks, \method achieves strong performance with 28B total and 9B activated parameters, reaching an average score of 85.9. Under a common training recipe, introducing the routed visual interface improves the LLM-only MoE baseline by 4.9 points, with vision-side experts providing a further complementary gain. Matched controls show that coupling fusion and injection through a shared depth preference is more effective than learning the two routes independently. Beyond aggregate performance, the learned route changes with questions posed about the same image, and route interventions show that mismatched visual-depth preferences impair prediction. The projector also retains its advantage under image down-sampling and blur. Our contributions are:
\begin{itemize}
\item We analyze the source-access bottleneck created when sparse language computation receives a fixed or already fused visual representation, and quantify question-dependent visual access with same-image route comparisons and route interventions.
\item We introduce \method, a sparse vision-language model whose projector couples a prompt-conditioned global route visual depths with group-local patch fusion and dynamic language-layer injection, while token-level experts scale the vision and language stacks.
\item We evaluate \method\ on eight benchmarks and matched architectural controls, showing an 80.3$\rightarrow$85.2$\rightarrow$85.9 progression from LLM-only MoE to routed projector to the full model, together with a 1.1-point benefit from sharing the fusion and injection route.
\end{itemize}

\section{Related work}
\label{sec:related}

\paragraph{Sparse multimodal computation.}
MoE-LLaVA expands multimodal language-model capacity through sparse experts~\citep{lin2024moe}, while EvoMoE develops expert initialization and token-aware multimodal routing~\citep{jing2025evomoeexpertevolutionmixture}. These routers assign tokens to parameter experts. \method\ complements this computation with a projector that selects which intermediate visual streams contribute to the input sequence and language-layer updates.

\paragraph{Visual interfaces and deep injection.}
LLaVA-style models align image features with language embeddings through a projector~\citep{liu2023visualinstructiontuning,li2024llavaonevisioneasyvisualtask}, and BLIP-2 uses learned queries to bridge the backbones~\citep{li2023blip2}. DeepStack extends visual access by supplying visual tokens to deeper language layers~\citep{meng2024deepstackdeeplystackingvisual}. \method\ connects input fusion and deep injection through a route computed for each image-prompt pair. Prompt-conditioned queries compare visual streams, local fusion refines the resulting preference for each patch group, and the same preference weights residual injection at assigned language layers. This connection preserves patch positions while coordinating the two visual pathways.

\paragraph{Low-rank adaptation and routing structure.}
LoRA represents learned weight updates with low-rank factors~\citep{hu2022lora}. In \method, attention over the image and prompt constructs dynamic low-rank transformations, and the depth route weights their application. Query-participation control regulates the descriptors used for selection, while an ordered usage prior encourages exploration across visual depths during training.

\section{Method}
\label{sec:method}

\subsection{OmniMoE-VL overview}
\label{sec:overview}

\method\ combines token-level vision and language experts with a sample-level visual-depth projector. For image-prompt pair $x=(x_V,x_T)$, let $H_T\in\R^{L_T\times d}$ denote prompt embeddings and $V_s\in\R^{N\times d_V}$ a visual stream. We extract $S=10$ streams at blocks $\pi(s)=1+4(s-1)$, or $1,5,\ldots,37$, with aligned patch positions. Prompt-conditioned queries select four depths through $w(x)\in\Delta^{S-1}$, shared by visual input tokens $Z$ and language-layer transformations $A_\ell$:
\begin{align}
 Z(x)&=\mathcal P\big(V_{1:S},H_T;w(x)\big),
 \label{eq:interface}\\
 A_\ell(x)&=\sum_{s:a(s)=\ell}w_s(x)G_\ell T_s(x).
 \label{eq:coupling}
\end{align}
Here $\mathcal P$ fuses projected streams at corresponding patch positions into tokens that join the prompt at the language input. The image and prompt generate $T_s(x)$, and learned low-rank $G_\ell$ maps its output to layer $\ell$. Fixed anchors $a(s)=s$ connect the streams to language layers $1,\ldots,10$.

Figure~\ref{fig:method} traces three distributions: query weights $p$ determine the prompt descriptors; $w$ selects visual depths; and local weights $g_b$ refine this preference for patch group $b$. The same $w$ directly weights injection. Table~\ref{tab:routes} distinguishes these axes from token-level parameter-expert routing.

\begin{figure}[t]
\centering
\includegraphics[width=\linewidth]{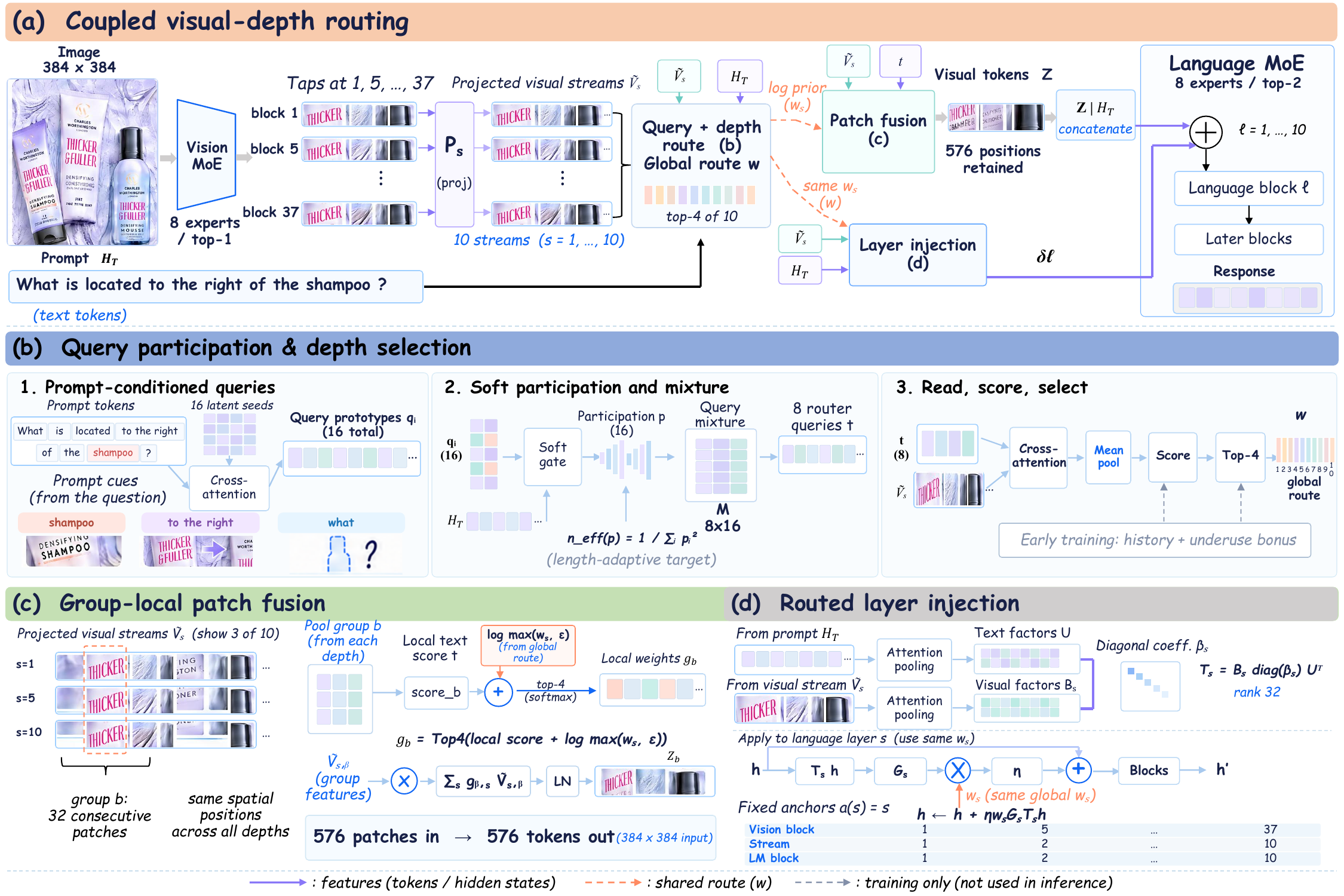}
\caption{\textbf{OmniMoE-VL couples visual-depth selection with fusion and language-layer updates.} (a) The model routes ten projected visual streams through one global depth distribution $w$. (b) Sixteen softly weighted prototypes form eight queries for top-4 depth selection. (c) The route supplies a prior for group-local weights $g_b$, which fuse aligned features into 576 visual tokens. (d) The same $w$ scales rank-32 updates at fixed language layers $a(s)=s$. Solid arrows carry features; dashed peach arrows carry shared routing weights.}
\label{fig:method}
\end{figure}

\subsection{Prompt-conditioned query participation}
\label{sec:queries}

A learned bank $C\in\R^{Q\times d}$ cross-attends to $H_T$, producing $Q=16$ prompt-conditioned prototypes $q_i$ with participation weights $p_i$:
\begin{equation}
 q_{1:Q}=\LN\big(\Attn(C,H_T,H_T)+\rho C\big),\qquad
 p=(1-Q\epsilon_q)\softmax(u(x_T))+\epsilon_q\mathbf 1.
 \label{eq:queries}
\end{equation}
The residual $\rho C$ retains each seed's contribution. Logits $u$ combine prototype features, prompt context, and query biases; $\epsilon_q=10^{-4}$ keeps each prototype available. Prompt length sets a smooth participation target:
\begin{equation}
 \neff(p)=\frac{1}{\sum_i p_i^2},\qquad
 \mathcal L_q=\big(\neff(p)-\ntgt(L_T)\big)^2,\qquad
 \ntgt(L_T)=\operatorname{clip}\big(\log(1+L_T)+4,2,Q\big).
 \label{eq:participation}
\end{equation}
The effective count increases as probability mass spreads across the bank. Longer prompts receive a larger participation target within the same fixed capacity. A learned $M\in\R^{R\times Q}$ uses the soft weights $p$ to form $R=8$ router queries:
\begin{equation}
 \alpha_{ri}=\softmax_i\!\left(\frac{M_{ri}+\log p_i}{\tau_q}\right),
 \qquad t_r=\sum_i\alpha_{ri}q_i,\qquad \tau_q=0.9.
 \label{eq:querymix}
\end{equation}
These mixtures retain multiple prompt-conditioned descriptors for the visual-depth comparison that follows. All prototypes participate through soft weights, while the bank size remains fixed.

\subsection{Global depth selection and local patch fusion}
\label{sec:router}

Depth-specific projections $\widetilde V_s=P_s(V_s)\in\R^{N\times d}$ let router queries read all streams at language width. A scoring network maps each mean readout to its depth score:
\begin{equation}
 d_s(x)=\frac{1}{R}\sum_{r=1}^{R}\Attn_s(t_{1:R},\widetilde V_s,\widetilde V_s)_r,
 \qquad r_s(x)=f_s(d_s(x)).
 \label{eq:depthscore}
\end{equation}
At inference, the resulting scores determine the global route, $w=\TopK_4(r/T)$. Here $\TopK_k$ retains the $k$ largest logits, applies softmax on that support, and sets the remaining entries to zero.

During early training, we smooth historical usage across neighboring depths and add an underuse bonus. With $c_s$ the exponential moving average of depth usage, the augmented logits are
\begin{equation}
 K_{ij}=e^{-(i-j)^2/\gamma},\quad
 \bar p=\frac{K(c+\alpha\mathbf1)}{\mathbf1^\top K(c+\alpha\mathbf1)},\quad
 z_s=\frac{r_s}{T}+\lambda_h\log\bar p_s+b_s(c).
 \label{eq:history}
\end{equation}
The kernel spreads historical mass to nearby depths, while $b_s$ favors underused streams through a normalized inverse-square-root count bonus. Both additions are disabled after exploration and during evaluation, when content scores determine the route. Appendix~\ref{app:training} specifies the exploration schedule and auxiliary objectives for depth routing.

To refine the global preference locally, we partition patches into consecutive groups $\mathcal B_b$ of 32 tokens. Let $v_{b,s}$ be the normalized group mean of $\widetilde V_s$ and $\bar t$ the normalized mean of $t_{1:R}$. Local compatibility and the global prior jointly determine fusion:
\begin{align}
 g_b&=\TopK_4\!\left(\left\{\frac{\langle v_{b,s},\bar t\rangle}{\tau_g}
       +\log\max(w_s,\epsilon_w)\right\}_{s=1}^{S}\right),
 \label{eq:localroute}\\
 Z_{\mathcal B_b}&=\LN\!\left(\sum_{s=1}^{S}g_{b,s}\widetilde V_{s,\mathcal B_b}\right),
 \qquad \tau_g=0.7,\quad\epsilon_w=10^{-12}.
 \label{eq:localfusion}
\end{align}
Each group uses its own $g_b$ to refine the sample-wide preference $w$, and concatenating the fused groups restores the original patch sequence. The log floor assigns a small local prior to globally inactive depths. Their global weights remain zero for the injection branch, which uses $w$ directly.

\subsection{Routed visual injection into language layers}
\label{sec:injection}

Alongside the fused input, the selected streams provide visual updates to intermediate language states. Attention over the prompt constructs directions $U(x_T)\in\R^{d\times r_T}$, and attention over each projected stream constructs visual directions $B_s(x_V)\in\R^{d\times r_T}$. Pairing the text and visual directions gives coefficients $\beta_s(x)$ and the transformation
\begin{equation}
 T_s(x)=B_s(x_V)\diag(\beta_s(x))U(x_T)^\top,\qquad r_T=32.
 \label{eq:dynamic}
\end{equation}
Applied to a hidden state, these factors first read prompt-derived coordinates and then reconstruct a visual residual through $B_s$. A learned rank-32 adapter $G_{a(s)}$ maps the residual to its assigned language layer. Equation~\ref{eq:coupling} weights this update by the same $w_s$ that guides projector fusion, linking the two uses of stream $s$.

For a column-vector hidden state $h_{\ell,t}$, the update is
\begin{equation}
 \delta_{\ell,t}=\eta_{\ell,t}\,A_\ell(x)h_{\ell,t},\qquad
 H_{\ell+1}=\Phi_\ell(H_\ell+\Delta_\ell).
 \label{eq:injection}
\end{equation}
Here $\eta_{\ell,t}$ combines injection gain, token-type weighting, and residual-norm control. Factorized evaluation avoids a dense $d\times d$ matrix. The update enters before the assigned block, with $w$, $U$, $B_s$, and $\beta_s$ determined by the image and prompt.
\section{Architectural Analysis}
\label{sec:analysis}

\subsection{Global selection, local adaptation, and layer-wise access}
Each patch group refines the common depth preference through $g_b$. Write $c_{b,s}=\langle v_{b,s},\bar t\rangle$ and $\bar w_s=\max(w_s,\epsilon_w)$. For two depths retained by local top-$k$ selection,
\begin{equation}
 \log\frac{g_{b,i}}{g_{b,j}}
 =\log\frac{\bar w_i}{\bar w_j}
  +\frac{c_{b,i}-c_{b,j}}{\tau_g}.
 \label{eq:local-odds}
\end{equation}
The shared prior sets the depth preference, and local compatibility adjusts it across patch groups. Injection carries this preference into language computation: $w_s$ scales the prompt--stream transformation at anchor $a(s)$, whose destination is fixed and content is input-dependent.

\subsection{Learning a route through both visual pathways}
Both pathways train $w$ through the language-modeling loss. Holding the selection support fixed, its effects on visual tokens $Z$ and transformations $A_\ell$ give
\begin{equation}
 \nabla_w\mathcal L_{\mathrm{LM}}
 =J_{Z,w}^{\top}\nabla_Z\mathcal L_{\mathrm{LM}}
 +\sum_{\ell}J_{A_\ell,w}^{\top}\nabla_{A_\ell}\mathcal L_{\mathrm{LM}}.
 \label{eq:shared-credit}
\end{equation}
The local Jacobians hold other inputs fixed. Table~\ref{tab:public}(b) tests shared versus independent routing during training; Table~\ref{tab:omni-route-intervention}(b) tests each pathway's use of the learned route at inference.

\subsection{Capacity and computation}
Fusion reads ten streams from one vision-encoder pass and retains the 576-position patch grid. Depth selection therefore changes features without extending the language input. Injection applies its two factors to $L$ states at cost $O(Ld(r_T+r_G))$, in addition to constructing them. Before token-dependent gains, their composition has rank at most $\min(r_T,r_G)=32$ per anchor.

The model stores 28B parameters and activates 9B. Execution also includes dense attention, visual processing, and dispatch, while storage covers the full expert pool. Section~\ref{sec:cost} reports memory and throughput; Appendix~\ref{app:analysis} develops the mathematical properties.

\section{Experiments}
\label{sec:experiments}
\subsection{Experimental setup}
\label{sec:setup}

\paragraph{Model and training.}
We initialize the vision encoder from SigLIP2-Giant at $384\times384$ resolution and the language model from Qwen3-8B~\citep{tschannen2025siglip2,yang2025qwen3}. Their even-indexed blocks contain eight feed-forward experts, using top-1 routing in the final vision stack and top-2 in the language stack. The projector connects the backbones through ten visual depths, 16 query prototypes, eight router queries, and top-4 depth selection. Table~\ref{tab:training} gives the training configuration, and Appendix~\ref{app:implementation} details the routing and injection settings.

\begin{table}[t]
\centering
\caption{Training configs for \textit{OmniMoE-VL}.}
\label{tab:training}
\small
\resizebox{0.9\linewidth}{!}{
\begin{tabular}{lcccc}
\toprule
Config & Stage I & Stage II & Stage III & Stage IV \\
\midrule
Vision MoE (experts / top-$k$) &
- &
- &
8 / 2 &
8 / 1 \\
LLM MoE (experts / top-$k$) &
- &
8 / 2 &
8 / 2 &
8 / 2 \\
Projector MoE (scales / top-$k$) &
10 / 4 &
10 / 4 &
10 / 4 &
10 / 4 \\
Trainable MoE params &
Projector only &
LLM only &
Vision only &
Projector + LLM + Vision \\
\midrule
Data & \textit{OMVL-16M-Alignment} & \textit{OMVL-16M-LLM} & \textit{OMVL-16M-Vision} & \textit{OMVL-16M-Joint} \\
Image resolution &
\multicolumn{4}{c}{$384\times384$} \\
Base Vision Tower &
\multicolumn{4}{c}{siglip2-giant-opt-patch16-384} \\
Base LLM Model &
\multicolumn{4}{c}{Qwen3-8B} \\
Epochs & \multicolumn{4}{c}{$1$} \\
Learning rate &
$1e-4$ &
$5e-5$ &
$5e-5$ &
$1e-5$ \\
LR schedule &
\multicolumn{4}{c}{$cosine$} \\
Batch size per GPU &
2 & 1 & 1 & 1 \\
Precision &
\multicolumn{4}{c}{BF16} \\
Deepspeed's ZeRO config &
ZeRO-2 &
ZeRO-2 offload &
ZeRO-2 offload &
ZeRO-2 offload \\
\bottomrule
\end{tabular}}
\end{table}

\begin{table}[t]
\centering
\caption{Eight-benchmark comparison. Parameters are in billions; $VQA^T$/$VQA^D$ denote TextVQA/DocVQA. Avg. uses listed scores; superscripts give their count. $\dagger$ marks independent evaluations (Appendix~\ref{app:table2-sources}). Bold/underlined scores denote first/second place.}
\label{tab:main-vlm}
\small
\setlength{\tabcolsep}{3pt}
\resizebox{\linewidth}{!}{
\rowcolors{4}{rowblueA}{rowblueB}
\begin{tabular}{l c c c c c c c c c c c}
\toprule
\multirow{2}{*}{\textit{Methods}} &
\multirow{2}{*}{\textit{\# Params}} &
\multirow{2}{*}{\textit{\# Act.}} &
\multicolumn{8}{c}{\textit{Benchmarks}} &
\multirow{2}{*}{\textit{Avg.}} \\
\cmidrule(lr){4-11}
& & & \textit{MMBench} & \textit{MMVet} & \textit{$VQA^T$} & \textit{$VQA^D$} & \textit{POPE} & \textit{MMMU} & \textit{OCRBench} & \textit{ChartQA} & \\
\midrule
\multicolumn{12}{l}{\textbf{\textit{Dense models}}} \\
Qwen3-VL-8B-Instruct & 8B & 8B & 85.0 & - & 82.9$^{\dagger}$ & - & - & 69.6 & \textbf{89.6} & \textbf{89.6} & 83.3$^{(5)}$ \\
GLM-4.6V-Flash & 9B & 9B & \underline{86.9} & - & - & - & - & \underline{71.1} & 84.7 & - & 80.9$^{(3)}$ \\
Qwen2.5-VL-7B & 7B & 7B & 83.5 & 67.1 & 84.9 & \underline{95.7} & 86.4 & 58.6 & 86.4 & 87.3 & 81.2$^{(8)}$ \\
\multicolumn{12}{l}{\textbf{\textit{Sparse models}}} \\
Step-3 & 321B & 38B & 81.1$^{\dagger}$ & \textbf{79.4}$^{\dagger}$ & - & - & - & \textbf{74.2} & 83.7$^{\dagger}$ & - & 79.6$^{(4)}$ \\
MoE-LLaVA & 5B & 4B & 68.0 & 35.9 & 57.0 & 25.6$^{\dagger}$ & 85.7 & - & - & 15.4$^{\dagger}$ & 47.9$^{(6)}$ \\
EvoMoE & 7B & 7B & 65.8 & - & 53.8 & - & 87.3 & - & - & - & 69.0$^{(3)}$ \\
\midrule
\textbf{\textit{OmniMoE-VL (w/o. Vision MoE)}} & \textbf{25B} & \textbf{9B} & \underline{86.9} & 77.9 & \underline{86.1} & \underline{95.7} & \textbf{88.3} & 69.4 & 88.2 & \underline{89.3} & \underline{85.2}$^{(8)}$ \\
\textbf{\textit{OmniMoE-VL}} & \textbf{28B} & \textbf{9B} & \textbf{87.8} & \underline{78.3} & \textbf{87.9} & \textbf{96.1} & \underline{88.2} & 70.1 & \underline{89.3} & \textbf{89.6} & \textbf{85.9}$^{(8)}$ \\
\bottomrule
\end{tabular}}
\end{table}

\paragraph{Data.}
We train on OMVL-16M, approximately 16.2M multimodal instruction samples curated from LLaVA-NeXT, Infinity-MM, LLaVA-OneVision, and LLaVA-Instruct~\citep{liu2024llavanext,gu2024infinitymmscalingmultimodalperformance,li2024llavaonevisioneasyvisualtask,liu2023visualinstructiontuning}. Appendix~\ref{app:data-construction} describes benchmark de-duplication. Stack-wise variants share these data and the progressive recipe. A second setting uses the public MoE-LLaVA/EvoMoE data and recipe; each component replacement retains the backbone, query bank, visual depths, routing budget, and remaining modules.

\paragraph{Progressive optimization.}
Stage I aligns the backbones through the projector. Stages II and III train language and vision experts in turn, and Stage IV jointly tunes all stacks at a lower LR.

\paragraph{Benchmarks and metrics.}
Our eight benchmarks are MMBench, MM-Vet, TextVQA, DocVQA, POPE, MMMU, OCRBench, and ChartQA~\citep{liu2024mmbenchmultimodalmodelallaround,yu2024mmvetevaluatinglargemultimodal,singh2019towards,mathew2020docvqa,li2023evaluating,yue2024mmmu,liu2024ocrbench,masry2022chartqa}, covering multimodal understanding, text recognition, documents, object hallucination, multidisciplinary reasoning, and charts. We express all scores on a 0-100 scale, normalizing OCRBench by its maximum, and report each mean with its task count in Table~\ref{tab:main-vlm}. Percentage changes use the stated comparison score as their reference. Component ablations-removal and corruption-robustness experiment experiments use MMBench, MM-Vet, TextVQA, DocVQA, and POPE.

\subsection{Comparison with dense and sparse VLMs}
\label{sec:comparison}

Table~\ref{tab:main-vlm} places \method\ alongside dense and sparse VLMs using model releases and published evaluations~\citep{qwen3vl2025,glm46v,bai2025qwen25vltechnicalreport,step3system,lin2024moe,jing2025evomoeexpertevolutionmixture}. Its eight-task average of 85.9 exceeds Qwen2.5-VL-7B's 81.2 by 4.7 points (+5.8\%), with gains on every task. The largest are 11.5 points on MMMU (+19.6\%) and 11.2 on MM-Vet (+16.7\%). \method\ leads the listed MMBench, TextVQA, and DocVQA results, matches Qwen3-VL-8B on ChartQA, and gains 5.0 points (+6.0\%) over it on TextVQA. Step-3 leads MMMU and MM-Vet at 74.2 and 79.4. The following controls examine the architectural sources of \method's performance.

\begin{table}[t]
\centering
\caption{Public-recipe results and matched projector controls. Left: comparison scores (benchmark coverage in Appendix~\ref{app:external}). Right: shared versus independent routes at matched parameter count, followed by component replacements. Visual experts are disabled throughout (b).}
\label{tab:public}
\small
\begin{minipage}[t]{0.45\linewidth}
    \centering
    \small
    \textbf{(a) Comparison under public recipes}
    \vspace{0.2em}

    \resizebox{\linewidth}{!}{
    \rowcolors{2}{rowblueA}{rowblueB}
    \begin{tabular}{l c}
      \toprule
      \textit{Method} & \textit{Avg.} \\
      \midrule
      MoE-LLaVA~\citep{lin2024moe} & 61.7 \\
      EvoMoE~\citep{jing2025evomoeexpertevolutionmixture} & 69.0 \\
      \midrule
      \textit{OmniMoE-VL (w/o. Visual stack MoE)} & \underline{83.7} \\
      \textbf{\textit{Full OmniMoE-VL}} & \textbf{84.1} \\
      \bottomrule
    \end{tabular}}
  \end{minipage}
\hfill
\begin{minipage}[t]{0.53\linewidth}
    \centering
    \textbf{(b) Matched projector controls}

    \resizebox{\linewidth}{!}{
    \rowcolors{2}{rowblueA}{rowblueB}
    \begin{tabular}{l c}
      \toprule
      \textit{Variant} & \textit{Avg.} \\
      \midrule
      \textbf{\textit{Full coupled projector}} &
      \textbf{83.7} \\
      \textit{Independent fusion/injection routes} &
      82.6 {\small (-1.1)} \\
      \textit{Plain softmax gate + matched reg} &
      81.8 {\small (-1.9)} \\
      \textit{Standard router + load-balance/UCB} &
      83.1 {\small (-0.6)} \\
      \textit{Per-layer LoRA (matched rank/params)} &
      \underline{83.3} {\small (-0.4)} \\
      \bottomrule
    \end{tabular}}
  \end{minipage}
\end{table}

\begin{table}[t]
\centering
\caption{\textbf{Architectural ablations.} (a) Stack placement across eight benchmarks: V/P/L denote vision/projector/language experts; $\ddagger$ marks the aggregate-derived mean (Appendix~\ref{app:aggregation}). (b) Projector removals under the component protocol: Q/R/I denote query control, depth routing, and visual injection.}
\label{tab:architecture}
\label{tab:stack}
\label{tab:components}
\normalsize
\setlength{\tabcolsep}{3pt}
\renewcommand{\arraystretch}{1.16}
\begin{minipage}[t]{0.46\linewidth}
\centering
\textbf{(a) Stack placement}\par\smallskip
\rowcolors{2}{rowblueA}{rowblueB}
\begin{tabular}{lcccc}
      \toprule
      \textit{Variant} & \textit{V} & \textit{P} & \textit{L} & \textit{Avg.} \\
      \midrule
      \textit{LLM-only MoE} &
      \xmark & \xmark & \cmark & $80.3^\ddagger$ \\
      \textit{LLM + Projector MoE} &
      \xmark & \cmark & \cmark & \underline{85.2} \\
      \textbf{\textit{Full OmniMoE-VL}} &
      \cmark & \cmark & \cmark & \textbf{85.9} \\
      \bottomrule
    \end{tabular}
\end{minipage}\hfill
\begin{minipage}[t]{0.52\linewidth}
\centering
\textbf{(b) Projector components}\par\smallskip
\rowcolors{2}{rowblueA}{rowblueB}
\begin{tabular}{lcccc}
      \toprule
      \textit{Variant} & \textit{Q} & \textit{R} & \textit{I} & \textit{Avg.} \\
      \midrule
      \textit{Full} &
      \cmark & \cmark & \cmark & \textbf{87.6} \\
      \textit{MLP-only projector} &
      \xmark & \xmark & \xmark & 81.3 \\
      \textit{w/o. Q} &
      \xmark & \cmark & \cmark & \underline{84.3} \\
      \textit{w/o. I} &
      \cmark & \cmark & \xmark & 83.8 \\
      \textit{w/o. Q\&R} &
      \xmark & \xmark & \cmark & 83.1 \\
      \bottomrule
    \end{tabular}
\end{minipage}
\end{table}

Under the public recipe (Table~\ref{tab:public}), the projector reaches 83.7, and vision experts raise the score to 84.1, a 0.5\% relative gain. The accompanying controls examine routing coupling and the projector's individual operations.

\subsection{Stack placement and projector components}
\label{sec:stack-results}
\label{sec:components}

Table~\ref{tab:stack}(a) places most of the gain at the visual interface. The routed projector raises the eight-task average from 80.3 to 85.2 (+6.1\%), adding 4.7 points on MMMU, 5.0 on OCRBench, and 1.2 on ChartQA. Vision experts add a further 0.7 average points (+0.8\%), bringing the total gain to 7.0\%. They raise TextVQA from 86.1 to 87.9 and OCRBench from 88.2 to 89.

Removing query control lowers the score from 87.6 to 84.3 (3.8\%); removing injection gives 83.8 (4.3\%). Removing both query control and structured routing gives 83.1 (5.1\%; Table~\ref{tab:components}(b)). Both selecting depths and applying their transformations contribute to the projector's gain.

At matched parameter count, independent fusion/injection routes score 82.6 vs. 83.7 for the coupled projector (Table~\ref{tab:public}(b)). Sharing the depth preference therefore improves the average by 1.1 points.

The softmax replacement removes the length-adaptive target while retaining the query synthesizer, 16 prototypes, ten depths, top-4 routing, and entropy, diversity, and balancing terms. With the same validation split and tuning budget, it scores 81.8, trailing the full projector by 1.9 points (+2.3\%). The standard load-balanced/UCB router scores 83.1, and rank- and parameter-matched per-layer LoRA scores 83.3; the proposed operations improve over these alternatives by 0.7\% and 0.5\%.

\subsection{Question-conditioned visual access}
\label{sec:depth-sensitivity}

\begin{table}[t]
\centering
\caption{\textbf{Question-conditioned visual access.}
For same-image comparisons, Qwen3-VL depth-profile distances are measured in nats/token,
while \method\ route distances use total variation (TV).
$\Delta D=D_{\mathrm{different}}-D_{\mathrm{paraphrase}}$.
The lower block reports route interventions in \method\ while holding the image,
target question, and non-route inputs fixed.}
\label{tab:question-conditioning}
\label{tab:depth-sensitivity}
\label{tab:omni-route-variation}
\label{tab:omni-route-intervention}

\small
\setlength{\tabcolsep}{3.5pt}
\renewcommand{\arraystretch}{1.16}
\begin{tabular}{@{}lrrcccccc@{}}
\toprule
\multirow{2}{*}{Comparison}
& \multirow{2}{*}{Images}
& \multirow{2}{*}{Targets}
& \multicolumn{3}{c}{Qwen3-VL: $D_{\mathrm{prof}}$}
& \multicolumn{3}{c}{\method: $D_{\mathrm{TV}}$} \\
\cmidrule(lr){4-6}\cmidrule(lr){7-9}
& & &
Different & Paraphrase & $\Delta D$
& Different & Paraphrase & $\Delta D$ \\
\midrule
\rowcolor{rowblueA}
Fixed target
& 283 & 322
& 0.257 & 0.171 & 0.086
& 0.301 & 0.162 & 0.139 \\
\rowcolor{rowblueB}
+ matched structural type
& 136 & 174
& 0.251 & 0.169 & 0.083
& 0.282 & 0.159 & 0.123 \\
\rowcolor{rowblueA}
+ matched answer length
& 193 & 214
& 0.253 & 0.193 & 0.060
& 0.264 & 0.171 & 0.093 \\
\rowcolor{rowblueB}
+ identical reference answer
& 44 & 53
& 0.137 & 0.089 & 0.049
& 0.217 & 0.145 & 0.072 \\
\midrule
\multicolumn{9}{@{}l}{\textit{Route intervention: reference-answer $\Delta$NLL (nats/token)}} \\
\cmidrule(lr){1-9}
\multicolumn{4}{@{}l}{Fusion route}
& \multicolumn{4}{l}{Injection route}
& $\Delta\mathrm{NLL}\downarrow$ \\
\midrule
\rowcolor{rowblueA}
\multicolumn{4}{@{}l}{Native $w_A$}
& \multicolumn{4}{l}{Native $w_A$}
& 0.00 \\
\rowcolor{rowblueB}
\multicolumn{4}{@{}l}{Paraphrase $w'_A$}
& \multicolumn{4}{l}{Paraphrase $w'_A$}
& 0.01 \\
\rowcolor{rowblueA}
\multicolumn{4}{@{}l}{Other-question $w_B$}
& \multicolumn{4}{l}{Native $w_A$}
& 0.04 \\
\rowcolor{rowblueB}
\multicolumn{4}{@{}l}{Native $w_A$}
& \multicolumn{4}{l}{Other-question $w_B$}
& 0.03 \\
\rowcolor{rowblueA}
\multicolumn{4}{@{}l}{Other-question $w_B$}
& \multicolumn{4}{l}{Other-question $w_B$}
& 0.09 \\
\bottomrule
\end{tabular}
\end{table}

We use 397 GQA images with four questions each~\citep{hudson2019gqa}, including 322 equivalent wordings across 283 images with the same semantic program and answer. Each target is paired with its own equivalent wording and with eligible other questions about the same image. The three controls separately match structural question type, reference-answer token count, or exact reference answer between the target and other question. We average eligible comparisons within targets, then targets within images, and weight images equally.

\paragraph{Visual-depth contributions.}
In Qwen3-VL-30B-A3B-Instruct~\citep{qwen3vl2025}, we zero one DeepStack output from encoder index 8, 16, or 24 (zero-based), retaining the other paths and main visual input. Evaluation uses BF16 and at most 262,144 pixels. Each entry of $e(q)$ is the change in reference-token NLL, excluding EOS. We center this profile to compare relative depth contributions:
\begin{equation}
 \widetilde e(q)=e(q)-\tfrac{1}{3}\big(\mathbf1^\top e(q)\big)\mathbf1,
 \qquad D_{\mathrm{prof}}(q,q')=\frac{\|\widetilde e(q)-\widetilde e(q')\|_2}{\sqrt{3}}.
 \label{eq:depth-sensitivity}
\end{equation}
Different questions have more distinct profiles than equivalent wordings in all four comparisons (Table~\ref{tab:depth-sensitivity}(a)), linking visual-depth contributions to the question's information demand.

\paragraph{Route variation in \method.}
We next measure how the model's global route changes with the question. For the same image, $w(q)$ is a distribution over ten depths with top-4 support. Its total variation distance is
\begin{equation}
 D_{\mathrm{TV}}(q_i,q_j)=\tfrac12\|w(q_i)-w(q_j)\|_1\in[0,1].
 \label{eq:route-tv}
\end{equation}
Table~\ref{tab:omni-route-variation}(a) gives distances of 0.301 for other questions and 0.162 for paraphrases, a paired gap of 0.139. Matching structural type, answer length, or the exact answer preserves this ordering, connecting route variation to question content beyond these controls.

The identical-answer control compares different questions with the same reference answer: route distances are 0.217 versus 0.145, a gap of 0.072. Thus, the distinction persists even when the output is fixed. Structural-type and answer-length matching preserve gaps of 0.123 and 0.093, respectively.

\paragraph{Route interventions.}
For target $q_A$, we replace its native route $w_A$ with a same-image paraphrase route $w'_A$ or another question's route $w_B$. Only routing changes; the image, target prompt, queries, and operator factors remain fixed. Writing fusion and injection routes as $u,v$, we score reference answer $Y$ by
\begin{equation}
 \mathrm{NLL}(q;u,v)=-\frac{1}{|Y|}\sum_{t=1}^{|Y|}\log p(y_t\mid y_{<t},x,q;u,v).
 \label{eq:route-nll}
\end{equation}
We report $\Delta\mathrm{NLL}=\mathrm{NLL}(q_A;u,v)-\mathrm{NLL}(q_A;w_A,w_A)$ in nats per answer token. All conditions score the same reference answer with teacher forcing. Local fusion is recomputed under the transferred prior, while injection retains the target question's operator factors. Fusion-only, injection-only, and joint replacement increase NLL by 0.04, 0.03, and 0.09, versus 0.01 for paraphrase transfer (Table~\ref{tab:omni-route-intervention}(b)). Matching routes to the question benefits prediction through both pathways.

\subsection{Visual degradation and deployment cost}
\label{sec:robustness}
\label{sec:cost}

Under down-sampling and blur (Table~\ref{tab:robustness}(a)), LLM-only MoE loses 3.9 points (4.8\%), the projector variant 1.2 (1.4\%), and the full model 0.8 (0.9\%). The interface's gains therefore persist with degraded input, with vision experts adding further robustness.

\begin{table}[h]
\centering
\caption{\textbf{Robustness and deployment.} (a) Clean/degraded scores under down-sampling and blur; drop is relative to clean accuracy. (b) Peak VRAM and decoding throughput on two H800 GPUs, BF16, $384\times384$ images (Appendix~\ref{app:profile}).}
\label{tab:deployment}
\label{tab:robustness}
\label{tab:profile-main}
\normalsize
\setlength{\tabcolsep}{3pt}
\renewcommand{\arraystretch}{1.16}
\begin{minipage}[t]{0.51\linewidth}
\centering
\textbf{(a) Visual degradation}\par\smallskip
\rowcolors{2}{rowblueA}{rowblueB}
\begin{tabular}{lccc}
      \toprule
      \textit{Variant} & \textit{Clean} & \textit{Shifted} & \textit{Drop} \\
      \midrule
      \textit{LLM-only MoE} &
      81.3 & 77.4 & 4.8\% \\
      \textit{LLM + Projector MoE} &
      \underline{87.0} & \underline{85.8} & \underline{1.4\%} \\
      \textbf{\textit{Full OmniMoE-VL}} &
      \textbf{87.6} & \textbf{86.8} & \textbf{0.9\%} \\
      \bottomrule
    \end{tabular}
\end{minipage}\hfill
\begin{minipage}[t]{0.47\linewidth}
\centering
\textbf{(b) Inference profile}\par\smallskip
\rowcolors{2}{rowblueA}{rowblueB}
\begin{tabular}{l c c}
\toprule
\textit{Model} & \shortstack{\textit{VRAM}\\\textit{(GB)}} & \shortstack{\textit{Decode}\\\textit{(tokens/s)}} \\
\midrule
Qwen3-VL-8B & \textbf{11} & \textbf{19.71} \\
\textbf{\textit{OmniMoE-VL}} & \underline{52} & \underline{17.52} \\
\bottomrule
\end{tabular}
\end{minipage}
\end{table}

\method\ decodes at 17.52 tokens/s with 52\,GB peak VRAM (Table~\ref{tab:profile-main}(b)): throughput is 11.1\% lower and memory 372.7\% higher than Qwen3-VL-8B. These costs include the full 28B expert storage, intermediate streams, and dynamic injection. ~\ref{app:utilization} reports expert and depth utilization.

\section{Conclusion}
\label{sec:conclusion}

In this work, we introduce \method, a sparse vision-language model that extends conditional computation from parameter-expert routing to prompt-conditioned visual access. \method\ combines sparse experts in the vision and language stacks with a routed projector that selects intermediate visual depths and coordinates their use in language computation. Across eight benchmarks, the model achieves an average score of 85.9 with 28B total and 9B activated parameters. Stack-wise comparisons show that the routed projector provides the largest architectural gain, improving the LLM-only MoE baseline by 4.9 points, while vision experts provide a further 0.7-point improvement. Component ablations and matched replacements support the contributions of query participation, depth routing, and visual injection, while image-degradation experiments show improved robustness. Overall, these results support prompt-conditioned visual-depth routing as a complementary form of conditional computation for sparse VLMs.
\label{end:main}

\bibliography{references}
\bibliographystyle{iclr2027_conference}
\clearpage
\appendix
\begin{center}
    {\Large\bf APPENDIX}
\end{center}
\appendix
\section{Implementation details}
\label{app:implementation}

\subsection{Tensor flow and routing granularity}
For a batch of $B$ image--prompt pairs, the projector receives prompt embeddings $[B,L_T,d]$, a valid-token mask $[B,L_T]$, and ten visual tensors $[B,N,d_V]$. Query synthesis converts the prompt embeddings into prototypes $[B,16,d]$ and participation weights $[B,16]$. Their weighted mixtures $[B,8,d]$ read the visual streams to compute $w\in\R^{B\times10}$, which is passed to both patch fusion and the injection provider.

Patch fusion operates on projected features of shape $[B,10,N,d]$, assigning a top-4 depth mixture $g_b$ to each group of 32 consecutive patch positions. Concatenating the fused groups restores an output of shape $[B,N,d]$. Normalization and a learned output scale prepare these tokens for insertion into the language-model input. At $384\times384$ resolution with $16\times16$ patches, the image grid contains $24\times24=576$ positions arranged in 18 groups.

For injection, the provider constructs $U\in\R^{B\times d\times32}$, ten visual-direction matrices of the same shape, and ten coefficient vectors $\beta_s\in\R^{B\times32}$. It applies these factors to the hidden states before the assigned Transformer block. With row-oriented hidden states, the operation is
\begin{equation}
 T_s(H)=\big((HU)\odot\beta_s\big)B_s^\top,
\end{equation}
where multiplication by $\beta_s$ broadcasts across hidden-state positions. After layer adaptation, $w_s$ scales the residual, and token-dependent gain and norm control adjust its magnitude before it enters the block.

\begin{table}[h]
\caption{Current full-model implementation. Visual-depth routing and parameter-expert routing operate on different axes.}
\label{tab:implementation}
\centering\small
\begin{tabularx}{\linewidth}{@{}lX@{}}
\toprule
Item & Setting \\
\midrule
Vision backbone & SigLIP2-Giant, patch size 16, input $384\times384$ \\
Language backbone & Qwen3-8B; hidden width $d=4096$ \\
Visual depth taps & $1,5,9,13,17,21,25,29,33,37$ (one-based) \\
Vision parameter experts & 8 experts, top-1, even-indexed blocks \\
Language parameter experts & 8 experts, top-2, even-indexed blocks \\
Query bank & $Q=16$; probability floor $10^{-4}$ \\
Router text queries & $R=8$; mixture temperature $0.9$ \\
Global visual route & $S=10$ depths; top-4 per sample \\
Projector output & Native patch sequence; groups of 32 tokens \\
Local visual route & Top-4 per group; temperature $0.7$; log prior from $w$ \\
Injection anchors & Stream index $s$ to language layer $a(s)=s$ \\
Dynamic / adapter ranks & $r_T=32$, $r_G=32$ \\
Token-type injection gain & Visual tokens: 1; text tokens: 0.25 when the mask is available \\
Factor regularization & Coefficient $10^{-5}$ on low-rank adapter factors \\
\bottomrule
\end{tabularx}
\end{table}

\begin{table}[h]
\caption{Routing objects and the decisions they control. Query participation is soft; visual-depth and parameter-expert selection use top-$k$ routes.}
\label{tab:routes}
\centering\small
\begin{tabular}{@{}llll@{}}
\toprule
Route & Unit & Candidates & Role \\
\midrule
$p$ & Prompt & 16 queries & Query participation \\
$w$ & Image--prompt pair & 10 depths, top-4 & Shared global depth selection \\
$g_b$ & Patch group & 10 depths, top-4 & Local feature fusion \\
Vision experts & Token & 8 experts, top-1 & Visual feature processing \\
Language experts & Token & 8 experts, top-2 & Language-state processing \\
\bottomrule
\end{tabular}
\end{table}

\subsection{Query gate and dynamic factors}
The query gate combines a prototype score, prototype--prompt-context interaction, prototype--latent-seed alignment, and a learned query bias. Its logits also contain pooled-text and length scalars shared across queries. Since softmax is invariant to these shared shifts, the participation objective supplies the explicit length adaptation. Attention and prompt pooling both exclude padding tokens.

To construct the dynamic injection, independent learned rank banks attend to prompt tokens and projected visual tokens, producing $U$ and $B_s$. The similarity between paired directions $u_j$ and $b_{s,j}$ then determines the coefficient vector:
\begin{equation}
 \beta_{s,j}=\softmax_j\!\left(\frac{u_j^\top b_{s,j}}{\sqrt d}\right).
\end{equation}
The resulting transformation feeds the learned low-rank adapter $G_\ell$, whose shared parameters act on the visual residual constructed for the current image--prompt pair. Residual-norm controls act before and after aggregation when enabled, and the multimodal token mask sets the visual/text gain. During inference, the language-layer updates reuse this injection context.

\subsection{Routing schedule and auxiliary objectives}
\label{app:training}
We use an exploration phase of 12,000 routing steps, during which the depth logits include the ordered historical prior, the underuse bonus, and top-$k$ logit noise. These additions are disabled after exploration and during evaluation, leaving the content scores to determine the route. To construct the historical terms, usage counts use EMA coefficient $0.99$, Dirichlet smoothing $\alpha=1$, kernel width $\gamma=1$, proximal coefficient $1/\tau=1$, and inverse-count stabilizer $10^{-3}$. The inverse-square-root bonus is normalized by its mean across depths.

For the query gate, participation and entropy losses have base weights $0.35$, and diversity has weight $1.0$. The budget and entropy terms anneal over 12,000 steps to minimum scales $0.05$ and $0.10$. The router adds query-diversity regularization and utility supervision with base coefficients $0.05$ and $0.35$, while its balancing term follows an 18,000-step schedule with minimum scale $0.25$. These auxiliary terms train the route; evaluation proceeds directly from sparse content routing to local fusion and dynamic injection.

\section{Evaluation protocols and supplementary results}
\label{app:recipe}

\subsection{Training recipe}
\paragraph{Dataset construction and benchmark de-duplication.}
\label{app:data-construction}
OMVL-16M aggregates multimodal instruction data from LLaVA-NeXT, Infinity-MM, LLaVA-OneVision, and LLaVA-Instruct. During curation, we screen candidate training records against the benchmark evaluation splits used in this paper. The screening combines image and sample identifier matching, file-hash checks, file-level matching, and image near-duplicate detection to identify exact copies and visually similar duplicates. Records matching evaluation examples are removed before the corpus is organized into the four training-stage subsets. Applying these checks across the merged source collections also identifies evaluation examples that reappear under different dataset entries or file paths. All comparisons trained on OMVL-16M use the resulting filtered subsets, keeping benchmark de-duplication consistent across architectural variants.

\paragraph{Stage-wise optimization.}
As specified in Table~\ref{tab:training}, each stage trains for one epoch on its designated OMVL-16M subset with a cosine learning-rate schedule. Stage I establishes visual--language alignment, after which stages II and III train the language and vision expert stacks in turn. Stage IV jointly tunes the system. The stack-wise comparisons use this common data and training setting, while the public-recipe study evaluates matched projector variants under a separate recipe.

\subsection{Benchmark aggregation and coverage}
\label{app:aggregation}
The main evaluation comprises MMBench, MM-Vet, TextVQA, DocVQA, POPE, MMMU, OCRBench, and ChartQA. All scores are expressed on a 0--100 scale. For model $m$, let $\mathcal{B}_m$ be its set of listed task scores and $n_m=|\mathcal{B}_m|$. Table~\ref{tab:main-vlm} reports
\begin{equation}
 \mathrm{Avg}_m=\frac{1}{n_m}\sum_{j\in\mathcal{B}_m}S_{m,j}.
\end{equation}
The superscript $(n_m)$ records each mean's coverage. With all eight scores, the means are 85.9125 for the full model, 85.225 for the projector variant without vision MoE, and 81.2375 for Qwen2.5-VL-7B, displayed as 85.9, 85.2, and 81.2. For the other rows, Qwen3-VL-8B-Instruct averages five scores to 83.3; GLM-4.6V-Flash averages three to 80.9; Step-3 averages four to 79.6; MoE-LLaVA averages six to 47.9; and EvoMoE averages three to 69.0. These partial means summarize the listed tasks. The full eight-task comparison and stack-wise analysis use $n_m=8$; emphasis in the Avg. column compares these complete rows.

LLM-only MoE scores 81.3 on average across MMBench, MM-Vet, TextVQA, DocVQA, and POPE, with MMMU 64.7, OCRBench 83.2, and ChartQA 88.1. Combining this aggregate with the three task scores gives
\begin{equation}
 \frac{5(81.3)+64.7+83.2+88.1}{8}=80.3125,
\end{equation}
which rounds to 80.3. The aggregate-derived mean is marked with $\ddagger$ in Table~\ref{tab:stack}(a). Across the one-decimal rounding interval for 81.3, the eight-task result lies within $[80.28125,80.34375)$ and rounds to the same 80.3.

Component removals and visual degradation are evaluated on MMBench, MM-Vet, TextVQA, DocVQA, and POPE. The clean averages are 81.3 for LLM-only MoE, 87.0 with the routed projector, and 87.6 with vision experts. The main benchmark comparison evaluates all eight tasks.

\subsection{Public-recipe comparison and benchmark coverage}
\label{app:external}
The public-recipe scores in Table~\ref{tab:public} follow each source's benchmark coverage: MoE-LLaVA's 61.7 averages MMBench, MM-Vet, TextVQA, and POPE, while EvoMoE's 69.0 averages MMBench, TextVQA, and POPE. Under the public training recipe, the routed projector achieves 83.7, and vision experts raise the score to 84.1. At matched parameter count, independent fusion and injection routes score 82.6. The accompanying query-gate, router, and injection replacements change one component at a time under the full projector's training and evaluation protocol.

\subsection{Sources for the external benchmark comparison}
\label{app:table2-sources}
Table~\ref{tab:main-vlm} collects results for specific model configurations from model releases and independent evaluations, marking the latter with a dagger. Each entry follows its cited protocol, and the task-count superscript specifies the coverage of its mean. Architectural comparisons use the stack-wise variants trained under the common OMVL-16M setting.

\paragraph{Qwen3-VL-8B-Instruct.}
The Qwen release~\citep{qwen3vl2025} supplies the MMBench, MMMU, and OCRBench entries. TextVQA 82.9 comes from OpenBMB's evaluation of the same checkpoint on TextVQA validation~\citep{minicpmo45card} and is marked with a dagger. OCRBench 896 is divided by ten to give 89.6.

\paragraph{GLM-4.6V-Flash and Step-3.}
The GLM-4.6V release~\citep{glm46v} reports MMMU validation 71.1 and OCRBench 84.7 for the 9B Flash model. Its comparison also reports a reproduced Step-3 OCRBench score of 83.7, marked with a dagger here. Step-3's own release supplies MMMU 74.2~\citep{step3card}. The benchmark names distinguish OCRBench from OCRBench~v2 and ChartQA from ChartQA-Pro.

\paragraph{MoE-LLaVA and EvoMoE.}
The MoE-LLaVA row uses Phi-2-2.7B with four experts, top-2 routing, and the SigLIP-384 encoder. Table~1 of \citet{yang2025stgc} reports DocVQA 25.6 and ChartQA 15.4 for this configuration, alongside the same MMBench 68.0, MM-Vet 35.9, TextVQA 57.0, and POPE 85.7 as the original report. The EvoMoE row corresponds to its OpenChat-7B configuration~\citep{jing2025evomoeexpertevolutionmixture}. Unlisted tasks retain a dash.

\subsection{Inference cost}
\label{app:profile}
Table~\ref{tab:profile-main}(b) measures peak VRAM and decoding throughput on two H800 GPUs with BF16 and $384\times384$ images. \method\ reaches 17.52 tokens/s with 52\,GB peak VRAM, compared with 19.71 tokens/s and 11\,GB for Qwen3-VL-8B. These measurements capture deployment resources alongside the model's 28B total and 9B activated parameters. Storage depends on the full expert pool, while expert execution follows the selected routes. The projector adds the work of reading ten visual streams and constructing dynamic factors for layer-wise injection.

\subsection{Expert and depth utilization}
\label{app:utilization}
Figure~\ref{fig:utilization} summarizes utilization on held-out data by normalizing expert frequencies within each MoE block and averaging the global depth distribution across samples. Language blocks distribute their load across several experts, while vision blocks show greater variation in expert usage. The projector's aggregate distribution emphasizes intermediate streams. For each individual sample, these allocations follow the sparse routes defined in Section~\ref{sec:method}.

\begin{figure}[p]
\centering
\includegraphics[width=.9\linewidth]{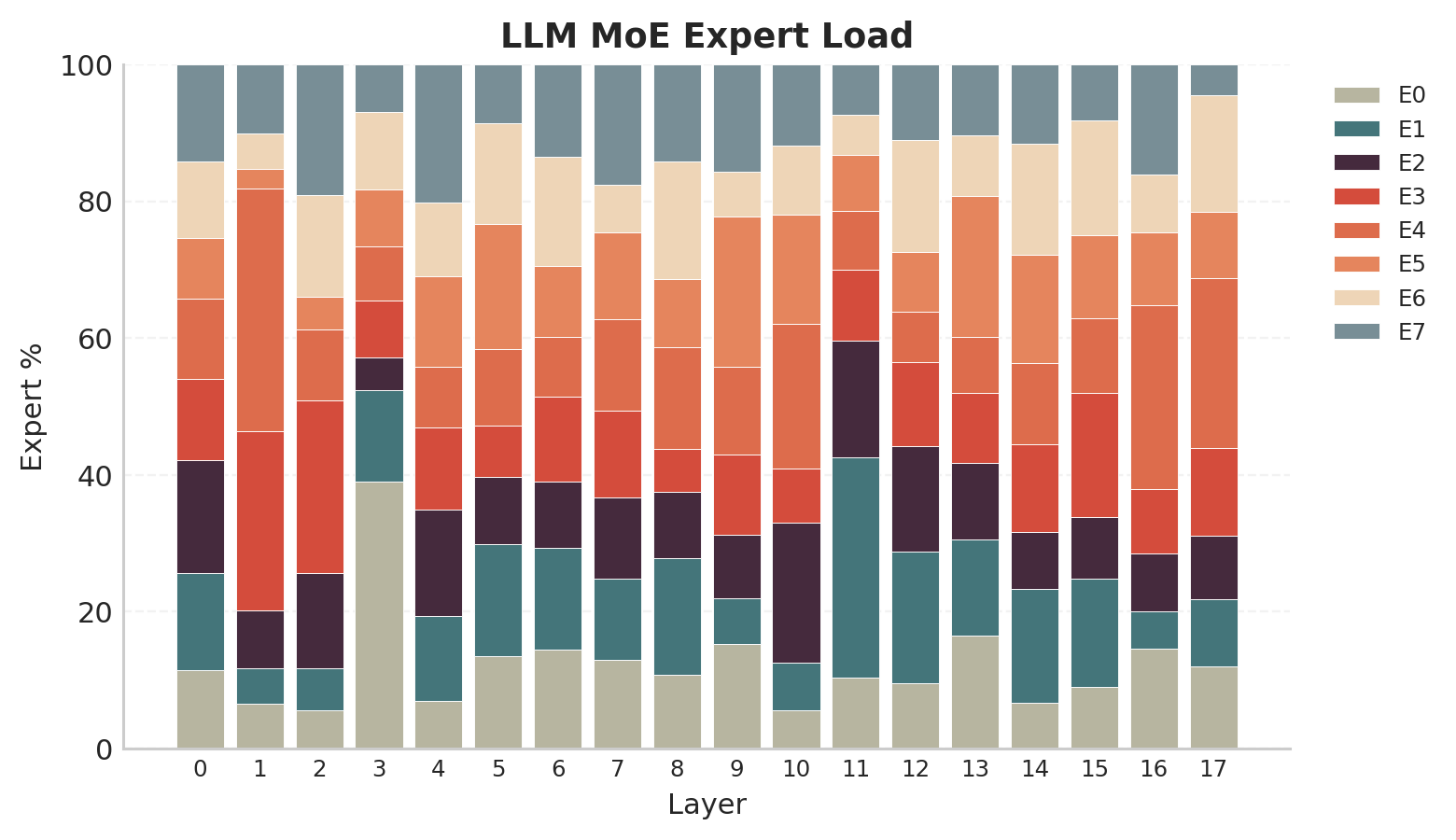}\\
\includegraphics[width=.9\linewidth]{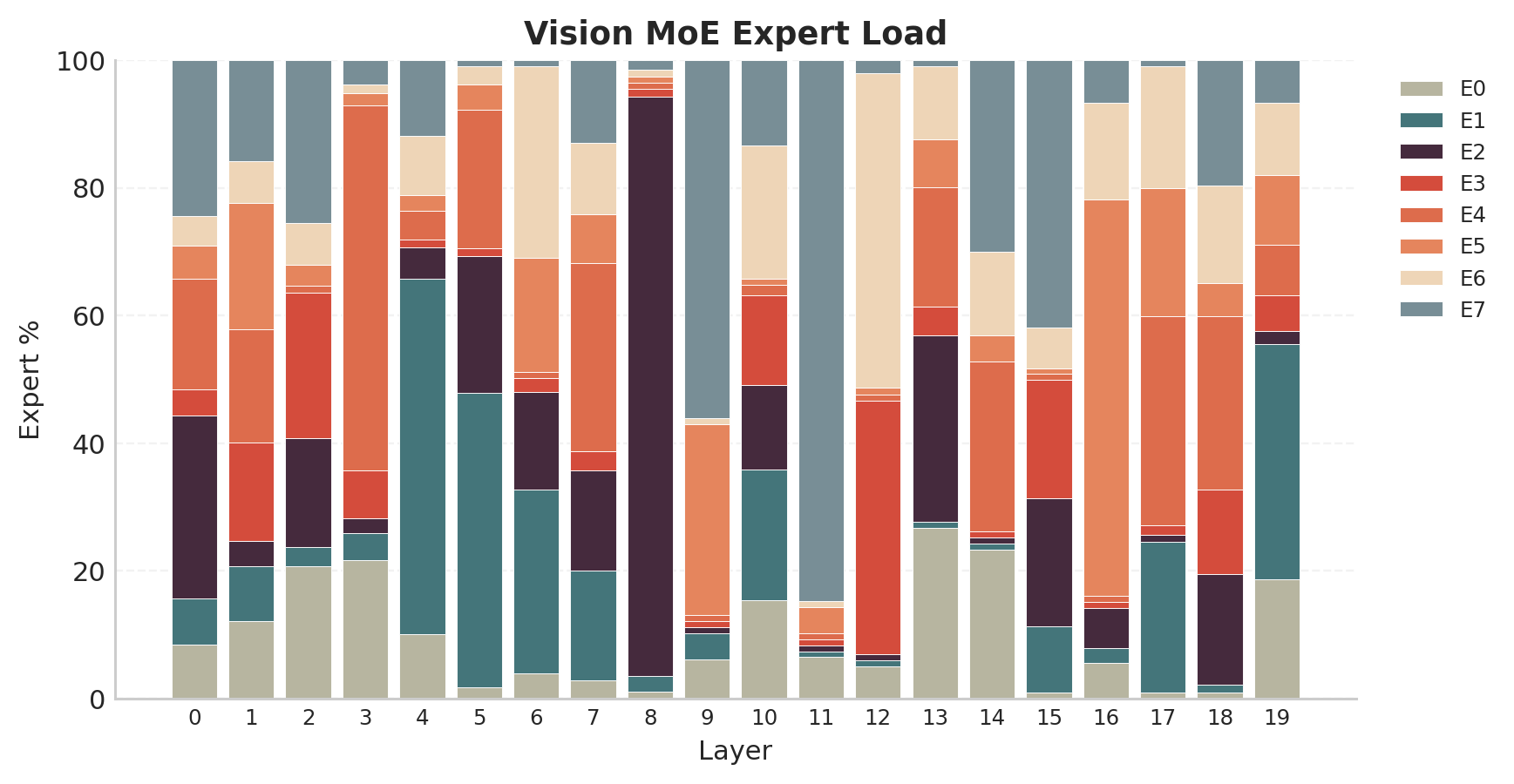}\\
\includegraphics[width=.44\linewidth]{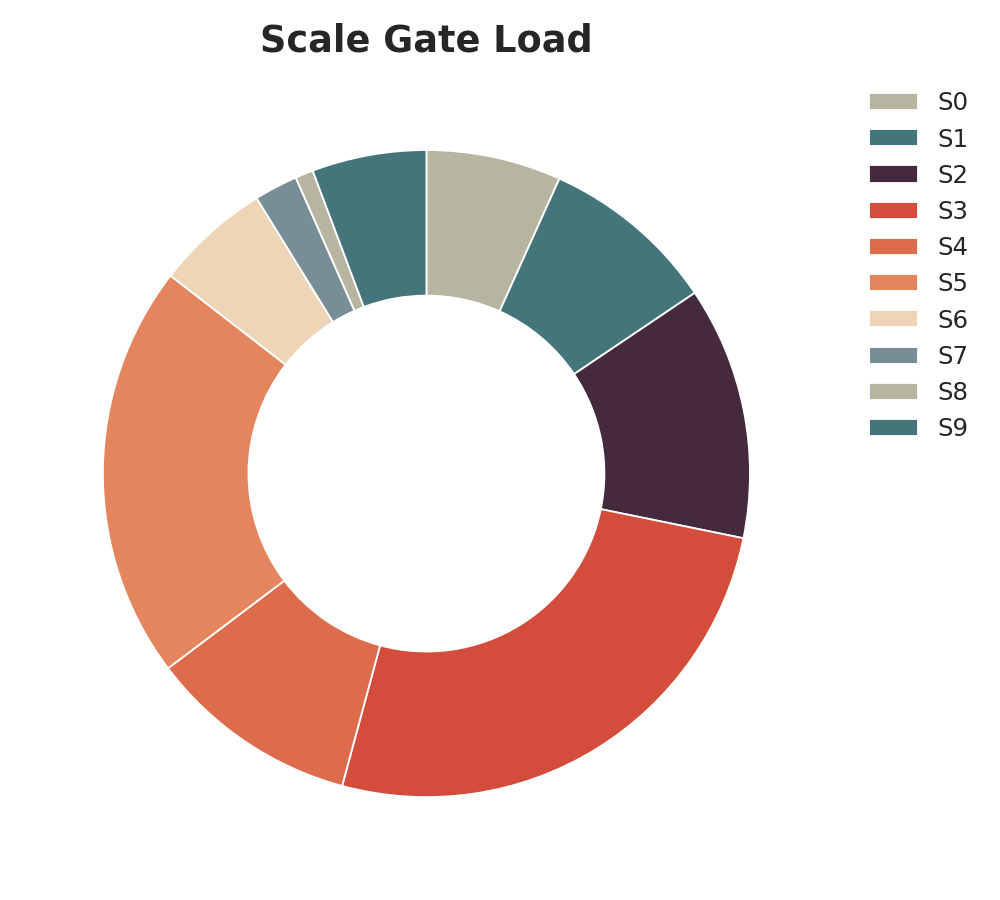}
\caption{Expert and visual-depth utilization on held-out data. Top: language-stack expert frequencies. Middle: vision-stack expert frequencies. Bottom: average global visual-depth weights. Layer axes enumerate the MoE blocks, and the depth labels S0--S9 enumerate the ten streams. E0--E7 identify parameter experts.}
\label{fig:utilization}
\end{figure}

\section{Mathematical properties of the routing operations}
\label{app:analysis}

The analysis below makes the finite-dimensional roles of query participation, ordered routing, and factorized injection explicit. It uses the same distributions and matrices as the forward computation.

\subsection{Query participation and conditioning}
For $p\in\Delta^{Q-1}$, Cauchy--Schwarz gives $\sum_i p_i^2\ge1/Q$, while $\sum_i p_i^2\le(\sum_i p_i)^2=1$. Hence $1\le\neff(p)\le Q$. A uniform distribution on $k$ entries has $\neff=k$. The floor in Equation~\ref{eq:queries} keeps every entry positive.

Let $D_x=[q_1,\ldots,q_Q]\in\R^{d\times Q}$, with nonzero singular values $\sigma_1\ge\cdots\ge\sigma_r>0$. Let $U_x$ be an orthonormal basis for the column space of $D_x$, and define $I_x(p)=D_x\diag(p)D_x^\top$.
\begin{proposition}[Conditioning on the query span]
If $p_i\ge\epsilon_q$, then
\begin{equation}
 \epsilon_q\sigma_r^2\le\lambda_{\min}(U_x^\top I_x(p)U_x)
 \le\lambda_{\max}(U_x^\top I_x(p)U_x)\le\sigma_1^2.
\end{equation}
\end{proposition}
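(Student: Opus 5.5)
The plan is to sandwich the compressed matrix $M:=U_x^\top I_x(p)U_x$ between scalar multiples of $U_x^\top D_xD_x^\top U_x$, whose eigenvalues are exactly the nonzero squared singular values of $D_x$. Since $p\in\Delta^{Q-1}$ and $p_i\ge\epsilon_q$ for every $i$, we have entrywise $\epsilon_q\le p_i\le 1$. Hence, in the Loewner order,
\begin{equation*}
 \epsilon_q I_Q\preceq\diag(p)\preceq I_Q .
\end{equation*}
Congruence by $D_x$ preserves the Loewner order, giving $\epsilon_q D_xD_x^\top\preceq I_x(p)\preceq D_xD_x^\top$. A further congruence by $U_x$ gives
\begin{equation*}
 \epsilon_q\,U_x^\top D_xD_x^\top U_x\preceq M\preceq U_x^\top D_xD_x^\top U_x .
\end{equation*}

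The next step identifies the spectrum of $N:=U_x^\top D_xD_x^\top U_x\in\R^{r\times r}$. Take a thin SVD $D_x=U_r\Sigma_rV_r^\top$, where $U_r$ is $d\times r$ with orthonormal columns spanning the column space and $\Sigma_r=\diag(\sigma_1,\ldots,\sigma_r)$. Because $U_x$ is also an orthonormal basis of that column space, $U_x=U_rO$ for some orthogonal $O\in\R^{r\times r}$. Then $N=O^\top\Sigma_r^2O$, so its eigenvalues are exactly $\sigma_1^2,\ldots,\sigma_r^2$. In particular $\lambda_{\min}(N)=\sigma_r^2>0$ and $\lambda_{\max}(N)=\sigma_1^2$. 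The point to be careful about is that the compression to the column space is what makes $\lambda_{\min}$ positive. Without $U_x$, the $d\times d$ matrix $I_x(p)$ would generally be singular, since $Q=16\ll d$.

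Finally, Loewner monotonicity of extreme eigenvalues (Courant--Fischer) turns the matrix sandwich into the scalar bounds. From $M\succeq\epsilon_q N$ we get $\lambda_{\min}(M)\ge\epsilon_q\sigma_r^2$, and from $M\preceq N$ we get $\lambda_{\max}(M)\le\sigma_1^2$. The middle inequality $\lambda_{\min}(M)\le\lambda_{\max}(M)$ is trivial.

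The upper bound can be sharpened to $\max_i p_i\,\sigma_1^2$, but the stated $\sigma_1^2$ already follows from $p_i\le1$. The only mildly delicate point is the change-of-basis argument $U_x=U_rO$; everything else is routine.
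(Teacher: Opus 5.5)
Your proof is correct and is essentially the paper's argument. The paper bounds the quadratic form $v^\top I_x(p)v=\sum_i p_i(q_i^\top v)^2$ between $\epsilon_q\|D_x^\top v\|^2$ and $\|D_x^\top v\|^2$ for $v$ in the query span and then applies the Rayleigh quotient, which is your Loewner sandwich followed by Courant--Fischer. Your thin-SVD step $U_x=U_rO$ simply makes explicit the paper's unjustified remark that the nonzero singular values of $D_x$ bound $\|D_x^\top v\|^2$ on its column space.
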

\begin{proof}
For any $v$ in the query span,
$v^\top I_x(p)v=\sum_i p_i(q_i^\top v)^2$ lies between
$\epsilon_q\|D_x^\top v\|^2$ and $\|D_x^\top v\|^2$.
The nonzero singular values of $D_x$ bound these quadratic forms on its column space. Applying the Rayleigh quotient proves the result.
\end{proof}
The floor controls conditioning on the represented span, and the participation objective controls the concentration of its coefficients. A bank of $Q$ vectors spans at most $Q$ dimensions.

The gate is trained by differentiating through its softmax logits and participation objective. For the softmax distribution before applying the probability floor, let $J=\diag(p)-pp^\top$. A logit-gradient step induces $p^+=p-\eta J^2\nabla_pF+O(\eta^2)$. The Fisher--Rao metric on the simplex is $g_p(u,v)=\sum_i u_iv_i/p_i$ for zero-sum tangent vectors, with natural gradient $J\nabla_pF$. The first expression describes optimization through the logits; the latter describes the probability coordinates under the Fisher--Rao metric.

\subsection{Ordered prior and sparse entropy-regularized selection}
The smoothing in Equation~\ref{eq:history} assigns larger kernel weights to nearby visual depths. The resulting positive prior $\bar p$ enters an entropy-regularized score objective. For $\lambda_h,\lambda_e\ge0$ and $\lambda_h+\lambda_e>0$, consider
\begin{equation}
 \min_{v\in\Delta^{S-1}}
 -\langle v,r/T+b\rangle
 +\lambda_h\KL(v\|\bar p)
 +\lambda_e\sum_s v_s\log v_s.
 \label{eq:surrogate}
\end{equation}
\begin{proposition}[Normalized prior logits]
The unique solution of Equation~\ref{eq:surrogate} is
\begin{equation}
 v_s^*=\softmax_s\!\left(
 \frac{r_s/T+b_s+\lambda_h\log\bar p_s}{\lambda_h+\lambda_e}\right).
\end{equation}
With a support budget of at most $k$, the optimal support contains the $k$ largest effective logits, with ties permitting multiple optimal supports.
\end{proposition}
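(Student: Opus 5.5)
I will rewrite the objective in Equation~\ref{eq:surrogate} as a single KL divergence to a Gibbs distribution, which gives existence, uniqueness, and the closed form together. Set $\lambda=\lambda_h+\lambda_e>0$ and $a_s=r_s/T+b_s+\lambda_h\log\bar p_s$. The prior $\bar p$ is strictly positive, since the smoothed counts $c+\alpha\mathbf1$ and the Gaussian kernel entries are positive. Hence $\log\bar p_s$ is finite and every $a_s$ is finite. Expanding $\KL(v\|\bar p)=\sum_s v_s\log v_s-\sum_s v_s\log\bar p_s$ and collecting terms, the objective becomes
\begin{equation}
 F(v)=\lambda\sum_s v_s\log v_s-\langle v,a\rangle .
\end{equation}
Let $v^*=\softmax(a/\lambda)$ with normalizer $Z=\sum_s e^{a_s/\lambda}$. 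Since $a_s=\lambda\log v^*_s+\lambda\log Z$, we get $F(v)=\lambda\KL(v\|v^*)-\lambda\log Z$ on the simplex, using $\sum_s v_s=1$.

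Gibbs' inequality gives $\KL(v\|v^*)\ge0$, with equality iff $v=v^*$. So $v^*$ is the unique minimizer, and it equals the stated softmax of the normalized effective logits. The convention $0\log0=0$ covers boundary points. This identity is cleaner than a Lagrangian/KKT argument because it avoids checking that the minimizer is interior; interiority follows automatically from $v^*_s>0$.

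For the support-budget claim, I read the problem as minimizing $F$ over $v$ with $|\mathrm{supp}(v)|\le k$. I will split it into an outer choice of support $A$ with $|A|\le k$ and an inner problem on $\Delta_A$. The argument above, restricted to $A$, gives the inner optimum $v^*_A=\softmax(a_A/\lambda)$ with value $-\lambda\log\sum_{s\in A}e^{a_s/\lambda}$. The outer problem then reduces to maximizing $\sum_{s\in A}e^{a_s/\lambda}$, a sum of positive terms. Enlarging $A$ can only help, so an optimal $A$ has exactly $\min(k,S)$ elements. An exchange argument finishes this step. If $A$ omits some $s'$ with $a_{s'}>a_s$ for an $s\in A$, then swapping strictly increases the sum. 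Every optimal support therefore consists of $k$ largest effective logits. When ties occur at the $k$-th value, all tie-breaking choices give the same sum, so multiple optimal supports exist.

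The only delicate point is interpreting "support budget" precisely and noting that the optimizer on the chosen support is the restricted softmax. This matches the $\TopK_k$ operation with temperature $\lambda$, which connects the surrogate to Equation~\ref{eq:history} when $\lambda=1$.
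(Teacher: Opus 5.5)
Your proof is correct, and its second half is the same as the paper's. You fix a support $A$, take the restricted optimum with value $-\lambda\log\sum_{s\in A}e^{a_s/\lambda}$, and maximize the partition sum. You also spell out that this sum strictly grows with $A$ and give the exchange step; the paper compresses both into one sentence.

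The first half takes a different route. The paper collects the entropy terms, cites strict convexity for uniqueness, and uses a Lagrange multiplier for $\sum_s v_s=1$ to obtain the softmax. You instead complete the objective to $F(v)=\lambda\KL(v\|v^*)-\lambda\log Z$ and apply Gibbs' inequality. Your identity does three things at once:
\begin{itemize}
\item It settles existence, uniqueness, and the closed form together.
\item It handles the nonnegativity constraints with no extra work. The paper's multiplier step enforces only the equality constraint, so it tacitly needs the fact that an interior stationary point of a convex function on the simplex is a global minimizer.
\item It gives the fixed-support optimal value directly as an exact equality, so one identity drives both halves of the proposition.
\end{itemize}
The paper's Lagrangian template is more generic: it applies to any strictly convex regularizer whose first-order condition can be solved. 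However, it needs that extra interiority line to be complete.

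One small imprecision: a tie at the $k$-th value produces multiple optimal supports only when the $k$-th and $(k{+}1)$-th largest effective logits coincide. Ties confined to the top $k$ leave the support unique.
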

\begin{proof}
Collecting the entropy terms gives a strictly convex objective with coefficient $\lambda_h+\lambda_e$. A Lagrange multiplier for $\sum_s v_s=1$ gives the softmax expression. On any fixed support $J$, the optimal value differs from a constant by
$-(\lambda_h+\lambda_e)\log\sum_{s\in J}\exp(z_s/(\lambda_h+\lambda_e))$,
where $z_s=r_s/T+b_s+\lambda_h\log\bar p_s$. The largest $k$ logits maximize this partition sum.
\end{proof}
The denominator sets the objective's effective temperature. The depth kernel supplies the ordered log prior, which combines with content scores and exploration bonuses in the routing logits. Training schedules these terms before sparse softmax selects the active depths.

\subsection{Rank, norms, and propagation of injected residuals}
The factorization in Equation~\ref{eq:dynamic} gives $\rank(T_s)\le r_T$. Composition with $G_\ell$ gives $\rank(G_\ell T_s)\le\min(r_T,r_G)$. With one stream per anchor, the pre-gain transformation at each injection layer has rank at most 32. The relevant matrix norm relationships are
\begin{equation}
 \|A\|_{\mathrm{op}}\le\|A\|_F\le\|A\|_*
 \le\sqrt{\rank(A)}\,\|A\|_F.
\end{equation}
For finite matrices, the Hilbert--Schmidt norm is the Frobenius norm. The shared factor regularizer uses
\begin{equation}
 \|UV^\top\|_*\le\tfrac12(\|U\|_F^2+\|V\|_F^2).
\end{equation}
This bounds the nuclear norm of the represented adapter through its learned factors.

Consider a reference recurrence $H_{\ell+1}=\Phi_\ell(H_\ell)$ and an injected recurrence $\widetilde H_{\ell+1}=\Phi_\ell(\widetilde H_\ell+\Delta_\ell)$. If $\Phi_\ell$ is $L_\ell$-Lipschitz on the trajectories and $\|\Delta_\ell\|_F\le d_\ell$, then
\begin{equation}
 \|\widetilde H_{L+1}-H_{L+1}\|_F
 \le \sum_{\ell=1}^{L}d_\ell\prod_{j=\ell}^{L}L_j
\end{equation}
for equal starting states. Applying the Lipschitz inequality at each layer and unrolling the recurrence gives the bound, with the product beginning at the current injection layer. Each residual's contribution is therefore weighted by its amplification through the remaining language computation.

\subsection{A minimal example of visual access}
Let $V_1,V_2$ be independent fair bits and let the prompt index $J\in\{1,2\}$ specify the target $Y=V_J$. An interface exposing only $V_1$ achieves error $1/2$ when $J=2$, even with an unrestricted downstream predictor. Exposing the prompt-selected bit $V_J$ gives zero error, as does preserving both bits. The example separates visual access from downstream prediction: answering the question requires the relevant variable to cross the interface. In \method, the routed projector implements this access decision over intermediate visual streams, and the architectural comparisons evaluate its contribution to the model.

\end{document}